\theoremstyle{definition}
\newtheorem{definition}{Definition}[section]
\newtheorem{theorem}{Theorem}[section]
\newtheorem{lemma}[theorem]{Lemma}
\title{\LARGE \bf Be your own Benchmark: \\
No-Reference Trajectory Metric on Registered Point Clouds}
\author{Anastasiia Kornilova$^{1}$ Gonzalo Ferrer$^{1}$
\thanks{The authors are with Skolkovo Institute of Science and Technology.
 {\tt\small \{anastasiia.kornilova,g.ferrer\}@skoltech.ru}
\newline 978-1-6654-1213-1/21/\$31.00 \textcopyright 2021 IEEE}
}
\begin{document}

\maketitle
\thispagestyle{empty}
\pagestyle{empty}

\begin{abstract}
This paper addresses the problem of assessing trajectory quality in conditions when no ground truth poses are available or when their accuracy is not enough for the specific task~--- for example, small-scale mapping in outdoor scenes. In our work, we propose a no-reference metric, Mutually Orthogonal Metric (MOM), that estimates the quality of the map from registered point clouds via the trajectory poses. MOM strongly correlates with full-reference trajectory metric Relative Pose Error, making it a trajectory benchmarking tool on setups where 3D sensing technologies are employed. We provide a mathematical foundation for such correlation and confirm it statistically in synthetic environments.
Furthermore, since our metric uses a subset of points from mutually orthogonal surfaces, we provide an algorithm for the extraction of such subset and evaluate its performance in synthetic CARLA environment and on KITTI dataset. The code of the proposed metric is publicly available as pip-package.
\end{abstract}


\section{Introduction}


In the past years, the explosion of 3D technologies has spanned numerous research works and algorithms on odometry and SLAM, creating a demand for effective and universal ways to assess the quality of the trajectories produced by those algorithms. Unfortunately, the variety of algorithm use-cases and conditions is not always covered by well-known and standard benchmarks~\cite{geiger2012we, schubert2018tum, sturm2012benchmark} and requires hyper-parameters tuning, evaluation, and comparison of algorithms on the collected target dataset.

High accurate trajectories are also important in the mapping domain~--- using them, one can aggregate small-scale maps from a set of registered sequential point clouds ~\cite{ferrer2019,liu2021balm} by using their estimated trajectory. For example, in the case of sparse point clouds from LiDAR, this allows obtaining a denser representation of the environment, increasing the level of scene understanding, such as feature extraction~\cite{li2019usip, yew2018featnet}, segmentation~\cite{behley2019semantickitti}.


\begin{figure}[h!]
    \centering
    \includegraphics[width=0.99\columnwidth]{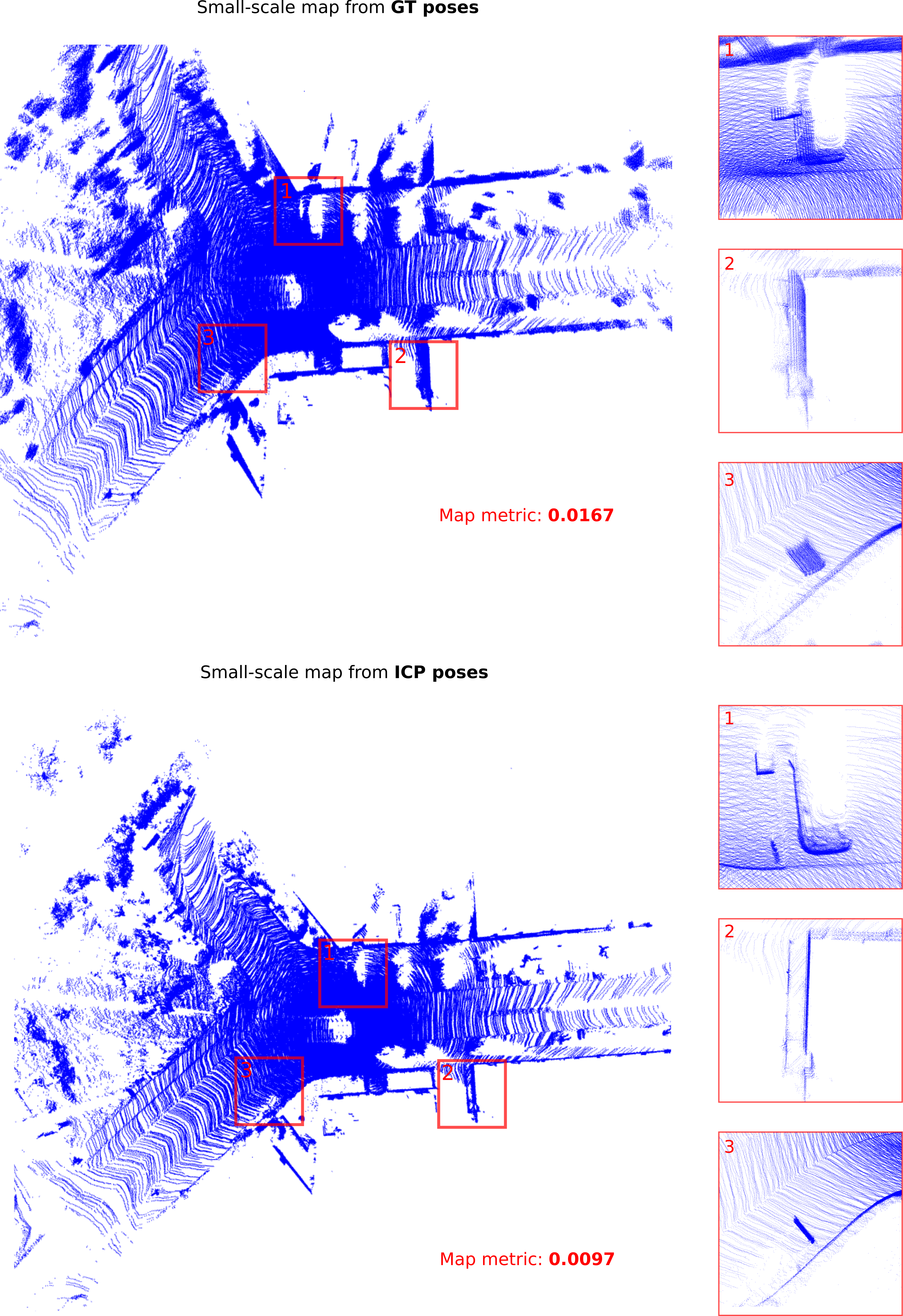}
    \caption{Demonstration of applying map metric for small-scale map aggregated from point clouds: top image presents map and its' details aggregated from GT poses, down image~--- map aggregated using registered poses from ICP algorithm. Inconsistency in the map from GT poses illustrates issues of using GNSS/INS for small-scale trajectories.} 
    \label{fig:teaser}
\end{figure}

The most straightforward approach to estimate quality is to use \textit{full-reference metrics} that need ground truth (GT) information and additional sensors to collect it: GT trajectory, obtained from GNSS/INS in outdoors~\cite{geiger2012we} or Motion Capture system indoors~\cite{schubert2018tum, sturm2012benchmark}, or GT map obtained from high-resolution static LiDARs~\cite{knapitsch2017tanks}. In addition to the high cost, those sensors also have their limitations: GNSS/INS is effective only on large-scale trajectories, either requires differential GPS, Motion Capture system (MoCap) is limited in space, static LiDARs could be ineffective in dynamic environments. Such examples of limitations could be found in a widespread KITTI odometry benchmark~\cite{geiger2012we} as demonstrated in Fig.~\ref{fig:teaser}~--- map aggregated from LiDAR point clouds using GT poses much more distorted than one produced by standard point2plane Iterative Closest Point (ICP) algorithm, which means imperfections of GNSS/INS GT in small-scale trajectories for outdoors.

An alternative way to overcome these limitations is to apply a \textit{no-reference metric} where the quality of the registered map and hence trajectory is estimated without GT by measuring the level of map ``inconsistency'' and ``noise''. Existing quantitative metrics for aggregated maps~~\cite{droeschel2014local, razlaw2015evaluation, zhang2021metric} tackle this problem from a heuristic point of view without demonstrating a strong correlation of map-based metrics with trajectory-based metrics on  different scenes and trajectories.

In this paper, we propose a no-reference trajectory metric MOM (Mutually Orthogonal Metric) that evaluates the quality of the trajectory via evaluating the map consistency from registered point clouds. MOM uses plane variance statistics on points from mutually orthogonal surfaces and hence strongly correlates with Relative Pose Error~\cite{kummerle2009measuring}~--- common full-reference trajectory metric to measure local trajectory consistency. Thus, MOM provides an opportunity to benchmark trajectory quality without GT information for any positioning system with 3D sensing technologies available.

We provide a mathematical interpretation to demonstrate the foundation of MOM correlation with full-reference trajectory metric and present statistical experiments with LiDAR point clouds in the CARLA simulator~\cite{Dosovitskiy17carla} on various scenes and trajectories. Because our method requires extraction of points from mutually orthogonal surfaces, we also propose an algorithm for such pre-processing and evaluate its effectiveness on real-life data from KITTI dataset scenes~\cite{geiger2012we}. Fig.~\ref{fig:teaser} illustrates that proposed map metric gives more objective values of aggregated map quality on KITTI dataset. The code of metrics, considered in this paper, is publicly available as pip-package~\footnote{\url{https://github.com/MobileRoboticsSkoltech/map-metrics}}.

The contributions of this paper are as follows:

\begin{itemize}
    \item no-reference trajectory metric for registered point clouds that gives strong correlation with Relative Pose Error and requires 40-60 times fewer points of the map in comparison to previous works;
    \item algorithm for extraction of points from mutually orthogonal subsets in point clouds;
    \item pipeline for evaluation different map metrics. 
\end{itemize}

\section{Related work}

The robotics, graphics, computer vision, and many other research communities have studied techniques to evaluate the quality of odometry and SLAM algorithms.
The most representative ones are: metrics for trajectory quality, metrics for point cloud quality, and no-reference metrics lying at the intersection of the first two.


Trajectory-based metrics are the main tool for evaluating odometry and SLAM algorithms in robotics. This approach requires ground truth trajectories and it can be mainly divided into two concepts: metrics based on Absolute Pose Error (APE)~\cite{lu1997globally, bar2004estimation}~--- to evaluate global trajectory consistency, and metrics based on Relative Pose Error~\cite{burgard2009comparison, kummerle2009measuring}~--- to evaluate local trajectory consistency. Among well-known odometry benchmarks the next metrics are used: the TUM dataset~\cite{sturm2012benchmark} and TUM VI~\cite{schubert2018tum}~--- APE and RPE on translation parts of trajectories, KITTI~\cite{geiger2012we}~--- a modified version of RPE that considers relative pose error with respect to the amount of drift for rotation and translation parts, EuRoC~\cite{burri2016euroc}~--- APE error.


Point cloud-based metrics have been actively developing in the last ten years by the graphics community. The main purpose of such metrics is to evaluate algorithm quality in the tasks of dense point cloud compression (down-sampling) and denoising. To evaluate metric quality, authors use collected benchmarks~\cite{alexiou2018benchmarking} from the subjective estimation of different point clouds' quality and estimate the correlation of proposed metrics using the Pearson correlation coefficient~\cite{pearson1895notes}. Among these approaches, the next metrics could be highlighted: point2point metric, point2plane metric~\cite{tian2017geometric}, angular similarity~\cite{alexiou2018point}, projection-based methods~\cite{torlig2018novel, alexiou2018point2d} or 3D-SSIM variants \cite{meynet2019pc, alexiou2020towards}.

Map-based metrics serve as a tool for estimation trajectory by evaluating the quality of the map from registered point clouds by odometry or SLAM algorithm. Because this method does not require a ground truth trajectory or map, it could be called no-reference. Mean Map Entropy~(MME) and Mean Plane Variance~(MPV)~\cite{droeschel2014local, razlaw2015evaluation} could be included in such class, providing statistics on average entropy and planarity for every point in the map and demonstrating their correlation in indoor with trajectory-based metrics on some examples. Our method belongs to this category.

\section{Background}
\subsection{Relative Pose Error}

To describe 3D poses of the trajectory, elements of the Special Euclidean group $SE(3)$ are used:

\begin{equation}
\begin{split}
\label{eq:se3}
    SE(3) = \left\{ T =  \begin{bmatrix} \bm{R} & \bm{t} \\ 0  & 1 \end{bmatrix} \, | \, \bm{R} \in SO(3) \, , \, \bm{t} \in \mathbb{R}^3 \right\} \\ 
    SO(3) = \left\{\bm{R} \in \mathbb{R}^{3 \times 3} | \, \bm{R}\bm{R}^T = I, \det (\bm{R}) = 1 \right\},
\end{split}
\end{equation}
where $\bm{R}$ is a rotation matrix, an element of the group of rotations $SO(3)$ and $\bm{t} = transl(T)$~--- translation.

Let $T^{gt} = \{T^{gt}_1, .., T^{gt}_N\}$ be the set of ground truth poses and $T^{est} = \{T^{est}_1, .., T^{est}_N\}$~--- poses estimated by some algorithm. Relative pose difference $E_{ij}$ is defined as:

\begin{equation}
    E_{ij} = \Delta T^{gt}_{ij} (\Delta T^{est}_{ij})^{-1} 
\end{equation}

\begin{equation}
    \Delta T_{ij} = T_{i} T_{j}^{-1}.
\end{equation}

Then, different measures of $E_{ij}$ could be considered for relative pose difference~--- for translation and rotation part. We will refer to the translation part since it presents the most important challenge for odometry and SLAM algorithm. 

\begin{equation}
    \|E_{ij}\|_{transl} = \|transl(E_{ij})\|
\end{equation}

Translation-based RPE will be considered for both 1D and 3D trajectories and will have the next forms. For 1D:



\begin{gather}
    \Delta T^{gt}_{ij}, \Delta T^{est}_{ij} \in \mathbb{R} \nonumber \\
    \|E_{ij}\|_{transl} = |\Delta T^{gt}_{ij} - \Delta T^{est}_{ij}|
\end{gather}

For 3D:

\begin{gather}
    \Delta T^{gt}_{ij}, \Delta T^{est}_{ij} \in SE(3) \nonumber \\
    \|E_{ij}\|_{transl} = \|transl(\Delta T^{gt}_{ij} (\Delta T^{est}_{ij})^{-1})\|_2
\end{gather}


Finally, mean squared among all pairs of $i$ and $j$ is calculated:

\begin{equation}
    \textit{RPE}(T) = \frac{1}{N}\sum_{ij} \|E_{ij}\|_{transl}^2
\end{equation}

\subsection{No-reference metrics}
\label{sec:back}
This section describes the main {\em no-reference} metrics Mean Map Entropy (MME) and Mean Plane Variance (MPV)~\cite{droeschel2014local, razlaw2015evaluation} and provides a discussion of their effectiveness and boundary cases.

Mean Map Entropy $H(P)$ of aggregated map $P = \{p_1,..,p_N |p_i \in \mathbb{R}^3\}$ is based on the entropy $h(p_k)$ of every point $p_k$ in some vicinity $W(p_k)$ and is averaged among all points:

\begin{equation}
    h(p_k) = \frac{1}{2}\det \big(2\pi e \Sigma(W(p_k)) \big)
\end{equation}

\begin{equation}
    H(P) = \frac{1}{|P|}\sum_{k=1}^{|P|} h(q_k),
\end{equation}
where $\Sigma(W(p_k))$ is the sample covariance of map points in a local vicinity $W(p_k)$ of $p_k$. 

The next expansion of $h(p_k)$ illustrates that MME captures the ``spread'' of point vicinity from the map along different axis, where $\lambda_1, \lambda_2, \lambda_3$~--- eigenvalues of $\Sigma(W(p_k))$. Assuming a locally flat surface around $p_k$, then the two major eigenvalues are of equal magnitude, proportional to the radius $R$ of the vicinity and hence $h(p_k)$ will depend only on $\lambda_{min}$:

\begin{align}
    h(p_k) &= \frac{1}{2} \det 2\pi e \Sigma(W(p_k)) = \frac{1}{2} (2\pi e)^3 \det \Sigma(W(p_k)) \nonumber \\ &= \frac{1}{2} (2\pi e)^3 \lambda_1 \lambda_2 \lambda_3 \approx \frac{1}{2} (2\pi e)^3 R^2 \lambda_{min}.
\end{align}

Mean Plane Variance $V(P)$ assumes that the majority of points lie on the planar surface, then map distortions could be measured as plane errors. Vicinity $W(p_k)$ of every point $p_k$ in the aggregated map $P$ is fitted to a plane and the corresponding plane error $v(p_k)$ is calculated and averaged for all map points. Since plane error is equal to minimum eigenvalue of sample covariance of $W(p_k)$, MPV will also depend only on $\lambda_{min}$.

\begin{equation}
    V(P) = \frac{1}{|P|}\sum_{k=1}^{|P|}v(p_k) = \frac{1}{|P|}\sum_{k=1}^{|P|} \lambda_{min}.
\end{equation}

Despite authors of~\cite{razlaw2015evaluation} demonstrated correlation of this metric with trajectory errors on indoor scenes with ground truth trajectory captured by Motion Capture system, the scene setup depicted in Fig.~\ref{fig:mme-counter} demonstrates the disadvantages of MME and MPV metrics in terms of correlation with trajectory error. The magnitude of shifts (red arrow) between a pair of point clouds is the same for top and down images, statistics collected from obtained maps will be different~--- the top map will have less map-metric value because of less deviation along the right plane. Furthermore, if points are not balanced among the planes, the impact of the right not-orthogonal plane will be even more.

\begin{figure}[ht]
    \centering
    \includegraphics[width=0.65\columnwidth{}]{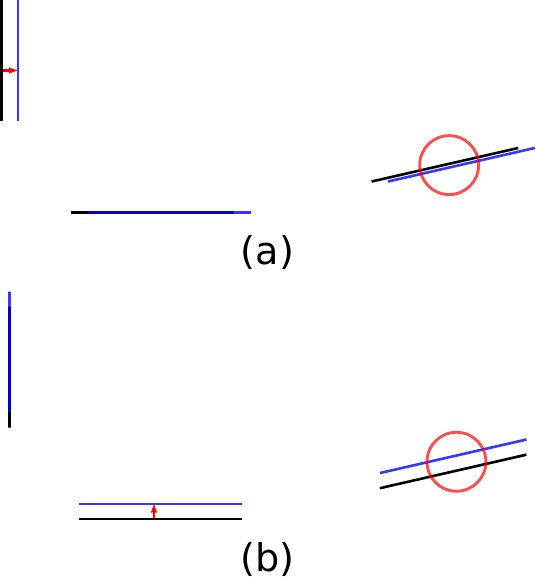}
    \caption{Example of different map metric values for the same magnitude of shift $\|E_{ij}\|$ between pair of point clouds: (a) map with less deviation among the right plane, (b) map with higher deviation among the right plane. Deviation among two other planes is the same.}
    \label{fig:mme-counter}
\end{figure}

Statistically, this could be demonstrated with a synthetic environment in 3D, which contains three planes. For such a setup, we consider 200 perturbed trajectories, whose length is five poses. Then MME and MPV metrics are calculated for maps aggregated from perturbed trajectories as well as trajectory error is calculated between GT and perturbed trajectories. Finally, statistics for the following three configurations are collected: (i) three orthogonal planes with an equal number of points on every plane, (ii) three orthogonal planes with a different number of points on every plane, (iii) three non-orthogonal planes with an equal number of points on every plane. The collected statistics and corresponding correlation coefficients (Pearson~\cite{pearson1895notes}, Spearman~\cite{spearman1961proof}, and Kendall~\cite{kendall1938new}) are depicted in Fig.~\ref{fig:mme-counter-stat} and numerically demonstrates degradation of correlation when points are not-balanced among planes and when planes are non-orthogonal.

\begin{figure*}[ht]
    \centering
    \includegraphics[width=0.9\textwidth]{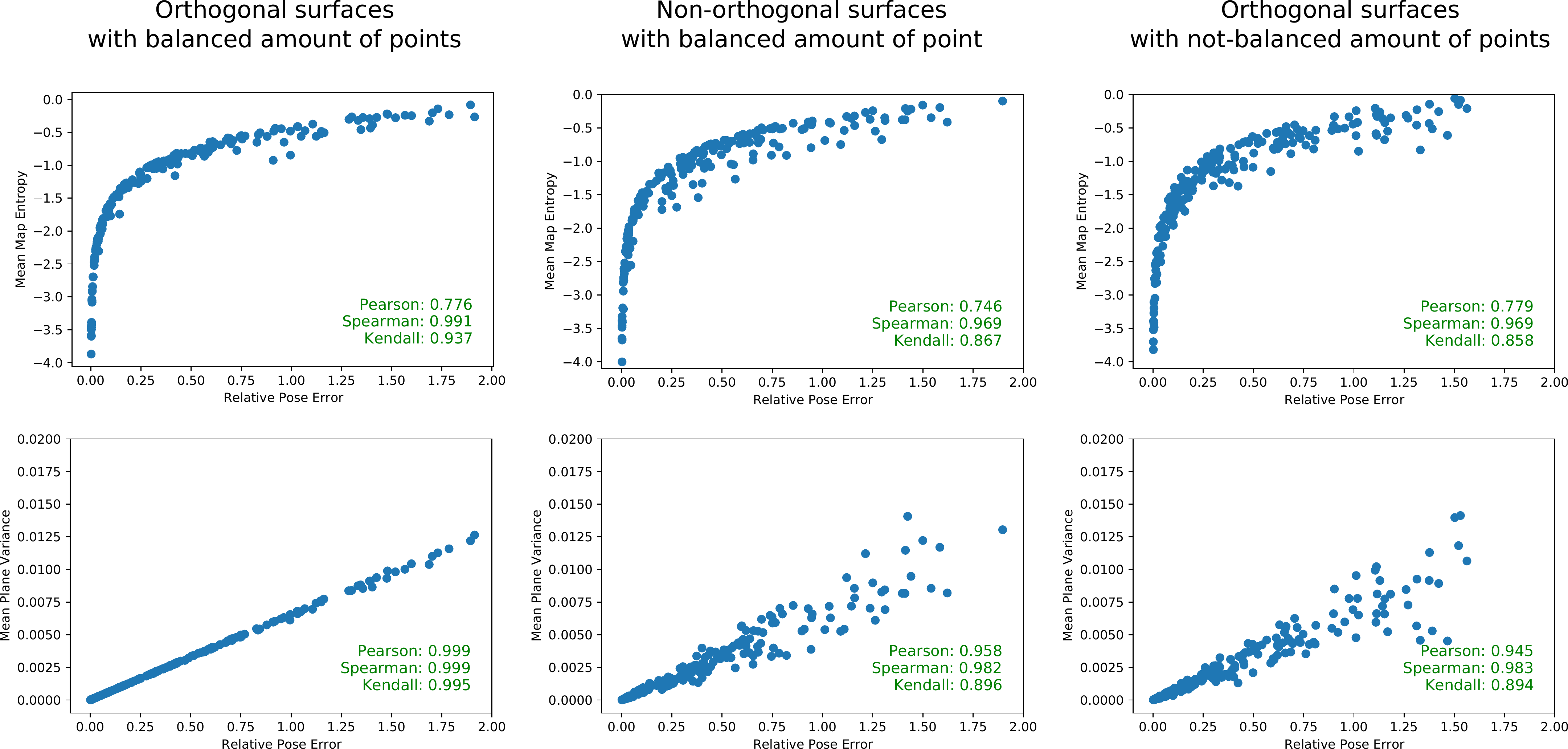}
    \caption{Results of statistical experiment that demonstrates degradation of map metric correlation with full-reference trajectory error in case of non-orthogonality of planes and not-balanced points among them.}
    \label{fig:mme-counter-stat}    
\end{figure*}

\section{Map-metric based on orthogonality}

From the example depicted in Fig.~\ref{fig:mme-counter} it could be noticed that considering only the first two mutually orthogonal planes removes such ambiguity in map metric because distortions in one direction don't affect distortion in others. The issue with not-balanced even orthogonal surfaces could be solved by considering every mutually orthogonal direction separately and averaging statistics along every direction. This section demonstrates how this intuition on using map statistics from mutually orthogonal surfaces could be extended to any number of poses with some assumptions and proposes an algorithm for extracting points from mutually orthogonal surfaces from the point cloud.

\subsection{No-reference metric based on orthogonality for translation shifts}
\label{sec:lemma}

Firstly, let us consider the next lemma which connects map-metric and trajectory metric (RPE).

\begin{lemma}
\label{lemma:corr}
For a set of one dimensional points $A = \{a_1, .., a_N | a_i \in \mathbb{R}\}$, their sample covariance $\Sigma(A)$ linearly depends on relative pose error $RPE(A)$ without dependence on points themselves, only on their amount.

\begin{equation}
    \frac{\Sigma(A)}{RPE(A)} = \alpha(N)
\end{equation}
\end{lemma}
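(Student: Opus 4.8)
The plan is to reduce both quantities to the \emph{dispersion} of the set $A$ and then close with a one-line variance identity, so the whole argument hinges on a single modeling step. First I would make explicit where the points in $A$ come from: each $a_i$ is one fixed physical (surface) point re-expressed in the world frame through the \emph{estimated} pose $i$. In 1D a pose is a pure translation, so if the true world location is $x\in\mathbb{R}$ and the per-pose error is $\delta_i = T^{est}_i - T^{gt}_i$, then $a_i = x + \delta_i$, with the \textbf{same} offset $x$ for every $i$. Consequently $a_i - a_j = \delta_i - \delta_j$, and the absolute location $x$ is already irrelevant to every pairwise or centered difference of the $a_i$.

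Next I would push this model through both sides. For the trajectory metric, the 1D definition gives $\|E_{ij}\|_{transl} = |\Delta T^{gt}_{ij} - \Delta T^{est}_{ij}| = |\delta_i - \delta_j| = |a_i - a_j|$, so that
\[
RPE(A) = \frac{1}{N}\sum_{ij} \|E_{ij}\|_{transl}^2 = \frac{1}{N}\sum_{ij}(a_i - a_j)^2 .
\]
For the map statistic, writing $\bar a = \frac{1}{N}\sum_i a_i$ gives $a_i - \bar a = \delta_i - \bar\delta$, hence $\Sigma(A) = \frac{1}{N}\sum_i (a_i - \bar a)^2$ also depends only on the errors and not on $x$. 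This is already the ``without dependence on the points themselves'' part of the claim: the common offset $x$ cancels in the centering.

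Then I would finish with the elementary identity obtained by expanding the square and using $\sum_i a_i = N\bar a$, namely $\sum_{ij}(a_i - a_j)^2 = 2N\sum_i (a_i - \bar a)^2$. Substituting, $RPE(A) = 2\sum_i (a_i - \bar a)^2$, so dividing by $\Sigma(A)$ the data-dependent factor $\sum_i (a_i - \bar a)^2$ cancels exactly and leaves
\[
\frac{\Sigma(A)}{RPE(A)} = \frac{\tfrac{1}{N}\sum_i (a_i - \bar a)^2}{2\sum_i (a_i - \bar a)^2} = \frac{1}{2N} =: \alpha(N),
\]
a constant depending only on the number of poses.

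The last two paragraphs are routine algebra; the real content — and the step I would be most careful to justify — is the modeling in the first paragraph, that aggregating \emph{one} surface point across the whole trajectory yields $a_i = x + \delta_i$ with a single shared $x$. That is precisely what converts a map-level spread into a rescaled trajectory-level RPE. I would also pin down conventions before committing to the exact value of $\alpha$: whether $\sum_{ij}$ runs over ordered or unordered pairs, and whether $\Sigma$ uses the $1/N$ or $1/(N-1)$ normalization. These choices only rescale the constant (to $1/N$, $1/(2(N-1))$, etc.) and never affect its independence from the $a_i$, which is the substance of the lemma.
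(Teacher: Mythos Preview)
Your argument is correct and is essentially the paper's proof: both reduce $\Sigma(A)$ and $RPE(A)$ to the same data-dependent factor, the paper by brute-force expansion into $\sum_i a_i^2$ and $\sum_{i\neq j}a_ia_j$, you by directly invoking the identity $\sum_{ij}(a_i-a_j)^2=2N\sum_i(a_i-\bar a)^2$. The paper uses the $1/(N-1)$ sample-covariance normalization and obtains $\alpha(N)=1/(2(N-1))$, exactly one of the convention variants you anticipated; your modeling paragraph ($a_i=x+\delta_i$) is nice motivation but not needed for the lemma as stated, since the paper simply treats $RPE(A)=\tfrac{1}{N}\sum_{i\neq j}(a_i-a_j)^2$ as a definition on the set $A$ itself.
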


\begin{proof}
Expanding $\Sigma(A)$:

\begin{align}
\Sigma(A) = \frac{1}{N-1}\sum_{i}(a_i - \frac{1}{N}\sum_j a_j)(a_i - \frac{1}{N}\sum_{j} a_j)^T =  \nonumber \\
\frac{1}{N^2(N-1)} \sum_{i} \big(Na_i - \sum_j a_j\big)^2 =  \nonumber \\
\frac{1}{N^2(N-1)} \sum_{i} (N^2a_i^2 - 2Na_i\sum_{j} a_j + (\sum_{j} a_j)^2) =  \nonumber \\ 
\frac{1}{N^2(N-1)} \Big((N^2 - N) \sum_{i} a_i^2 - 2N\sum_{i,j, i \neq j}a_ia_j\Big) = \nonumber \\
\frac{1}{N(N-1)} \Big((N - 1) \sum_{i} a_i^2 - 2\sum_{i,j, i \neq j}a_ia_j\Big).
\end{align}

Expanding $\textit{RPE}(A)$:

\begin{equation}
\begin{split}
    \textit{RPE}(A) = \frac{1}{N}\sum_{i \neq j} (a_i-a_j)^2 \\ = \frac{1}{N} ((2N - 2)\sum_{i} a_i^2 - 4\sum_{i\neq j} a_i a_j) = \\
    \frac{2}{N}((N - 1)\sum_{i} a_i^2 - 2\sum_{i, j, i\neq j} a_i a_j).
\end{split}
\end{equation}

Finally, the relation between those values does not depend on value of points from the original set $A$:

\begin{equation}
\begin{split}
    \frac{\Sigma(A)}{\textit{RPE}(A)} = \frac{1}{2(N - 1)}.
\end{split}
\end{equation}

\end{proof}

Let us consider $N$ shifted planes along direction $n$ as depicted in the Fig.~\ref{fig:orth-along-axis} with coordinates $A = \{a_1, .., a_N | a_i \in \mathbb{R}\}$ with respect to $n$. In case when points are balanced among those planes, plane variance, which is actually $\lambda_{min}$ of covariance matrix in some vicinity, will be equal to $\Sigma(A)$, and following lemma~--- equal to RPE along axis $n$.

\begin{figure}[h!]
    \centering
    \includegraphics[width=0.8\columnwidth]{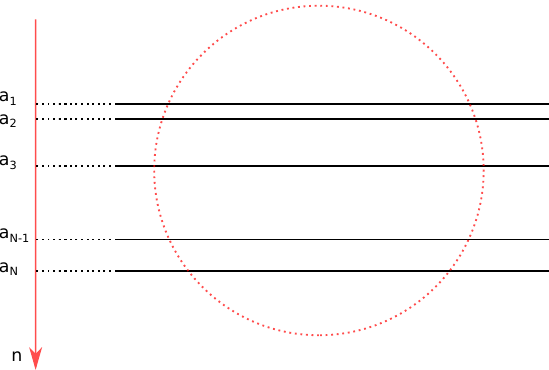}
    \caption{Example of N planes shifted along axes $n$ to demonstrate connection between Relative Pose Error for 1D and plane variance for set of points on those shifted planes.}
    \label{fig:orth-along-axis}
\end{figure}

Then, connection between RPE for 1D trajectory to RPE for 3D trajectory could be demonstrated.

\begin{lemma}
    Relative Pose Error for 3D trajectory could be decomposed in sum of three 1D Relative Pose Errors along each axis of any orthogonal basis, with the assumption that poses are a set of translations without rotations $T = \{T_1,.., T_N | T_i \in SE(3), R_i = \mathbf{I}_3 \}$.
\end{lemma}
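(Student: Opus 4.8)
The plan is to use the hypothesis $R_i = \mathbf{I}_3$ to collapse every $SE(3)$ composition into plain vector arithmetic on the translation parts, and then apply the Pythagorean theorem in an orthonormal basis to separate the squared error norm into three independent coordinate contributions. First I would write each pose as $T_i = \begin{bmatrix} \mathbf{I}_3 & \bm{t}_i \\ 0 & 1 \end{bmatrix}$, so that $T_j^{-1} = \begin{bmatrix} \mathbf{I}_3 & -\bm{t}_j \\ 0 & 1 \end{bmatrix}$ and therefore $\Delta T_{ij} = T_i T_j^{-1}$ has translation part $\bm{t}_i - \bm{t}_j$ with its rotation block still equal to $\mathbf{I}_3$. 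Propagating this through $E_{ij} = \Delta T^{gt}_{ij} (\Delta T^{est}_{ij})^{-1}$, whose rotation block is again the identity, gives the explicit error vector
\begin{equation}
    transl(E_{ij}) = (\bm{t}^{gt}_i - \bm{t}^{gt}_j) - (\bm{t}^{est}_i - \bm{t}^{est}_j) ,
\end{equation}
which I would regroup as $\bm{e}_i - \bm{e}_j$ by introducing the per-pose error $\bm{e}_i = \bm{t}^{gt}_i - \bm{t}^{est}_i \in \mathbb{R}^3$.

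Next I would fix any orthonormal basis $\{\bm{n}_1, \bm{n}_2, \bm{n}_3\}$ (a general orthogonal basis may be normalized first) and project $\bm{e}_i - \bm{e}_j$ onto each axis. Since the basis is orthonormal the cross terms cancel, and the Pythagorean identity yields
\begin{equation}
    \|transl(E_{ij})\|_2^2 = \| \bm{e}_i - \bm{e}_j \|_2^2 = \sum_{k=1}^{3} \big( (\bm{e}_i - \bm{e}_j) \cdot \bm{n}_k \big)^2 .
\end{equation}
The key observation is that each projected coordinate reproduces the 1D setup of Lemma~\ref{lemma:corr}: writing $a^{(k)}_i = \bm{e}_i \cdot \bm{n}_k$ for the scalar error of pose $i$ along $\bm{n}_k$, bilinearity of the dot product gives $(\bm{e}_i - \bm{e}_j) \cdot \bm{n}_k = a^{(k)}_i - a^{(k)}_j$, which is precisely the 1D relative translation error of the projected set $A^{(k)} = \{a^{(k)}_1, \ldots, a^{(k)}_N\}$.

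Finally I would substitute this expansion into $\textit{RPE}(T) = \frac{1}{N}\sum_{ij} \|E_{ij}\|_{transl}^2$ and interchange the two finite sums over pairs $(i,j)$ and over axes $k$, obtaining
\begin{equation}
    \textit{RPE}(T) = \sum_{k=1}^{3} \frac{1}{N}\sum_{ij} \big( a^{(k)}_i - a^{(k)}_j \big)^2 = \sum_{k=1}^{3} \textit{RPE}(A^{(k)}) ,
\end{equation}
which is the claimed decomposition along the three axes. I expect the only genuine obstacle to lie in the very first step: the clean difference $transl(E_{ij}) = \bm{e}_i - \bm{e}_j$ is special to the pure-translation regime, because a nontrivial $\bm{R}_i$ forces $(\Delta T^{est}_{ij})^{-1}$ to carry a $-\bm{R}^T \bm{t}$ term and the subsequent matrix product entangles rotation with translation, destroying the additive split. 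Once $R_i = \mathbf{I}_3$ reduces composition to subtraction, the remainder is just orthogonal projection and a reordering of summation, so no further difficulty is anticipated.
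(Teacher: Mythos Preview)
Your proposal is correct and follows essentially the same approach as the paper: use $R_i = \mathbf{I}_3$ to collapse the $SE(3)$ composition to a plain translation difference, then split the squared Euclidean norm along orthogonal axes via the Pythagorean identity and swap the finite sums. You are slightly more careful than the paper in explicitly carrying both $T^{gt}$ and $T^{est}$ through $E_{ij}$ and regrouping into per-pose errors $\bm{e}_i$, whereas the paper works directly with a single set of translations $t_i$, but the core argument is identical.
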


\begin{proof}
The next expansion demonstrates decomposition of RPE for 3D poses to sum of RPE for 1D along coordinate axis.

\begin{align}
    \textit{RPE}(T) = \frac{1}{N}\sum_{ij}^n \| trans(T_i \cdot T_j^{-1}) \|^2 = \nonumber \\
    \frac{1}{N}\sum_{ij} \| t_{i} - t_{j} \|^2 = \nonumber \\ 
    \frac{1}{N}\sum_{ij}\frac{1}{3} ((t^0_{i}- t^0_{j})^2 + (t^1_{i}-t^1_{j})^2 + (t^2_{i} - t^2_{j})^2) \nonumber \\ 
    = \frac{\textit{RPE}(t^0) + \textit{RPE}(t^1) + \textit{RPE}(t^2)}{3},
\end{align}
where $t^0_i, t^1_i, t^2_i$ define coordinates of translation vector $t_i$, and $t^0 = \{t^0_1,.., t^0_N | t_i^0 \in \mathbb{R}\}$, $t^1 = \{t^1_1,.., t^1_N | t_i^1 \in \mathbb{R}\}$, $t^2 = \{t^2_1,.., t^2_N | t_i^2 \in \mathbb{R}\}$~--- set of 1D poses along every coordinate axis.

Rotation of coordinate system gives the same decomposition but along new coordinate axis, which means that any orthogonal basis could be chosen to measure RPE in terms of 1D decomposition.
\end{proof}

Finally, connecting decomposition of 3D RPE in any orthogonal basis in 1D RPE (Lemma 2) and dependency between 1D RPE and plane variance (Lemma 1), map metric \textbf{MOM} (Mutually Orthogonal Metric) with interpretable correlation with trajectory metric could be defined in the next form.
\begin{enumerate}
    \item Extract points that lie on mutually orthogonal surfaces, directions of those surfaces will present orthogonal basis for RPE decomposition.
    \item Calculate mean plane variance of the map for every subset of points associated with one basis direction.
    \item Average mean plane variance along three directions of orthogonal basis. 
\end{enumerate}

Because any orthogonal basis could be chosen with respect to Lemma 2, it is desirable to choose such basis that will cover as maximum as possible elements of the map to gather more statistics.

\subsection{Orthogonal subset extraction from point clouds}
Following the discussion on statistics based on points from orthogonal surfaces, the problem is reduced to finding a maximum subset of points that lie on the orthogonal surfaces. Let $\mathcal{N} = \{n_i \, | \, n_i \in \mathbb{R}^3, \|n_i\| = 1\}$ be a set of point cloud normals. The set we are interested in can be defined as a set where every pair of normals are either co-linear or orthogonal. Because all sensor measurements tend to produce noisy observations and hence noisy normals, we will consider a concept of nearly co-linear and nearly orthogonal vectors in some vicinity $\epsilon$ defined in the following way.

\begin{definition}[Nearly co-linear]
Let pair of unit vectors $n_i$ and $n_j$ be nearly co-linear in some vicinity $\epsilon$ if
\begin{equation}
|n_i \cdot n_j| < \epsilon.
\end{equation}
\label{def:colinear}
\end{definition}

\begin{definition}[Nearly orthogonal]
\label{def:orth}
Let pair of unit vectors $n_i$ and $n_j$ be nearly orthogonal in some vicinity $\epsilon$ if 
\begin{equation}
|n_i \cdot n_j| > 1 - \epsilon.    
\end{equation}
\end{definition}

To gather more statistics from orthogonal surfaces, we are interested in the maximum subset of $\mathcal{N}$ where every pair of normals follows the property of nearly co-linearity or nearly orthogonality. This task could be formulated as a max clique problem on a graph $G(\mathcal{N})$ where normals are graph vertices, and every nearly co-linear or nearly orthogonal pair of vertices is connected via edge. Max clique problem is an NP-complete task which means that solving it for standard point cloud ($>$10k points) is not effective. To provide an effective solution, we do Agglomerative clustering of normals into nearly located normals with complete linkage criteria and then apply max clique search algorithm for the centers of obtained clusters ($<$100). Because several max cliques of clusters could be obtained, the max clique with the maximum amount of points in clique clusters is chosen to gather more statistics. Example on Fig.~\ref{fig:kitti-orth} demonstrates such orthogonal subset for KITTI point cloud extracted using the described algorithm.

\begin{figure}[h!]
    \centering
    \includegraphics[width=0.9\columnwidth]{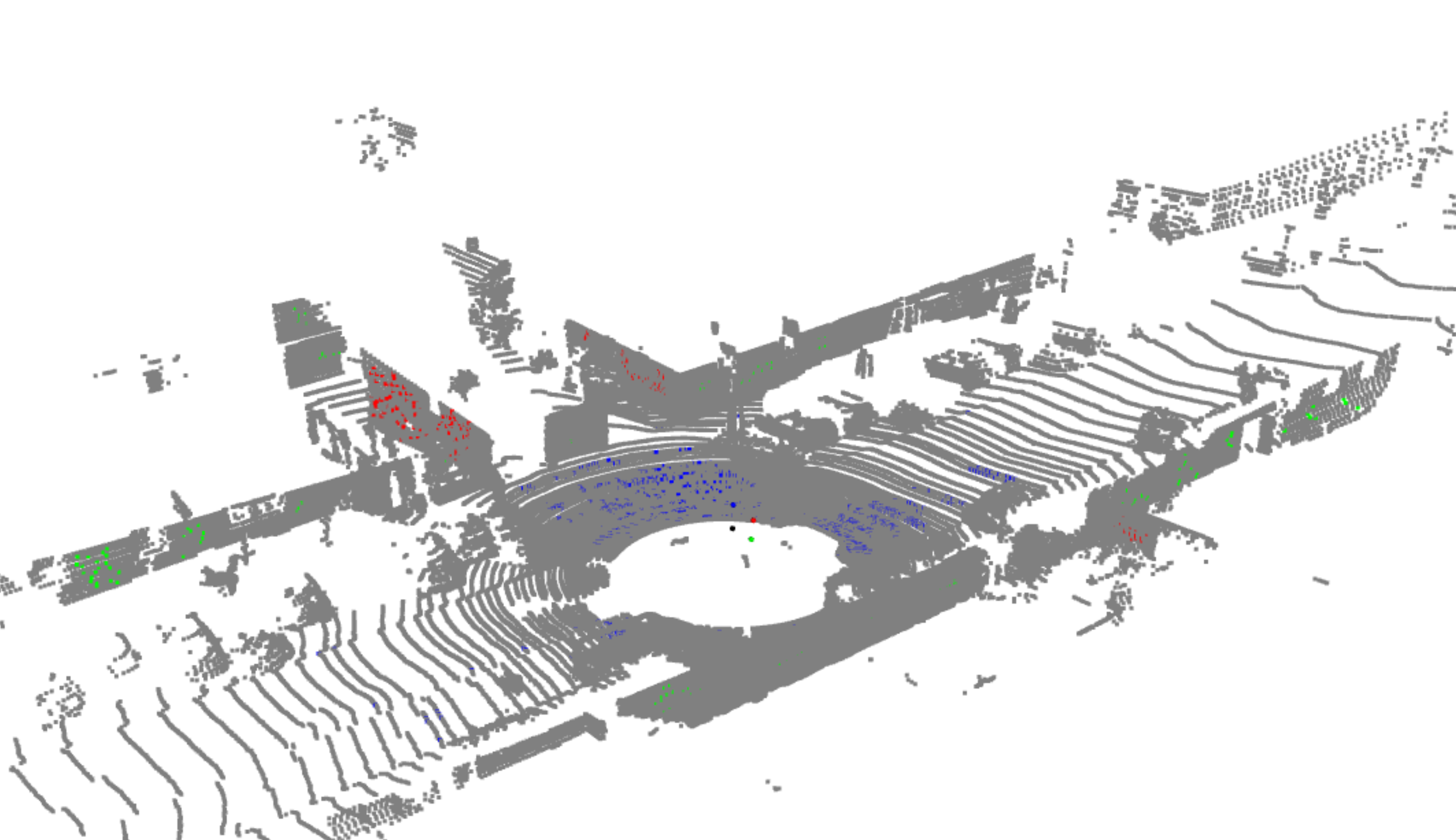}
    \caption{Example of orthogonal subset extraction for KITTI. Grey-colored points are the whole KITTI point cloud; red, green, and blue depict points related to mutually orthogonal surfaces.}
    \label{fig:kitti-orth}
\end{figure}

\begin{figure*}[h]
    \centering
    \includegraphics[width=0.9\textwidth]{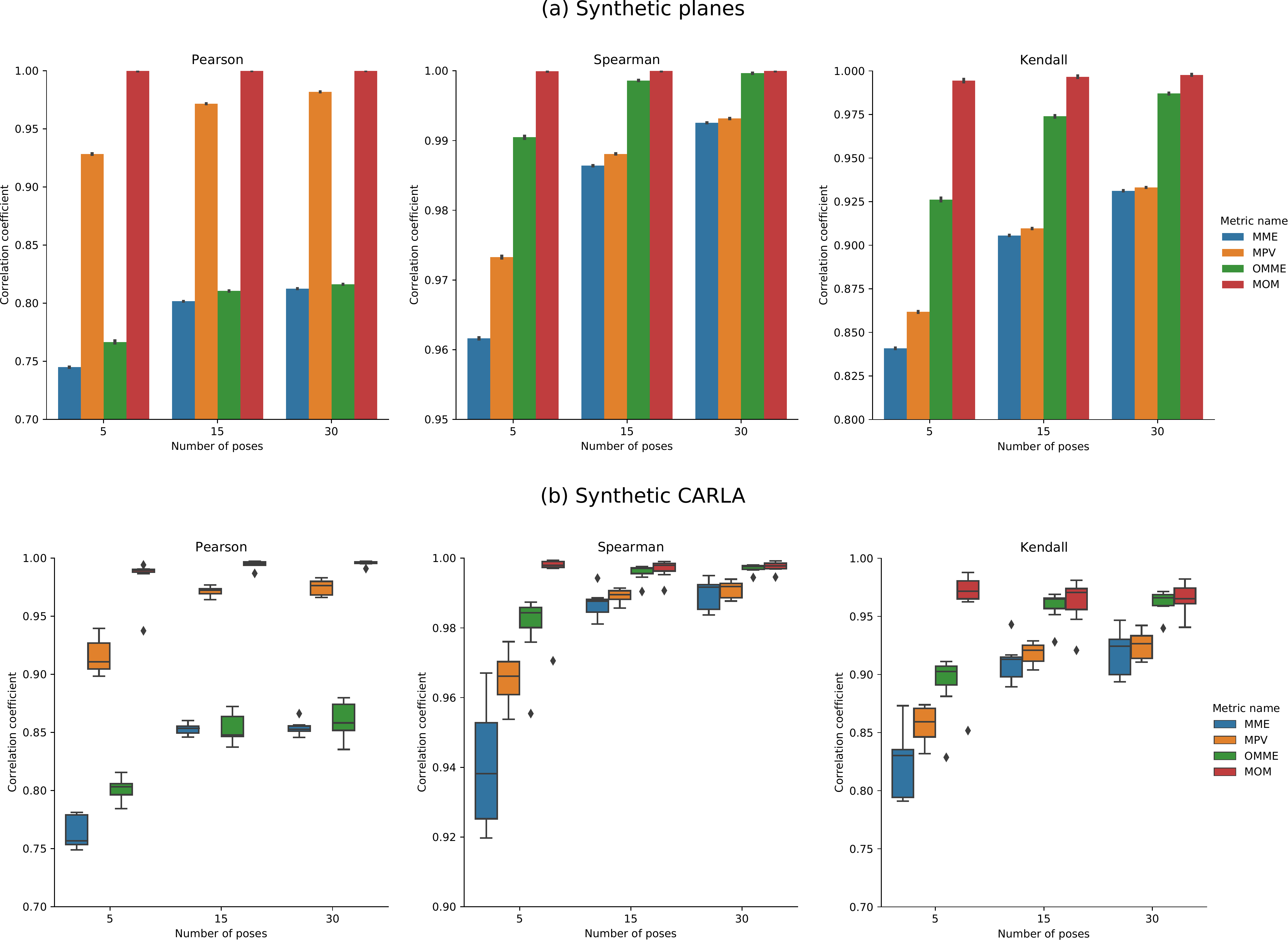}
    \caption{Metric evaluation in synthetic environments: (a)~--- synthetic planes environment, (b)~--- CARLA simulator environment. First, 200 perturbed trajectories of lengths 5, 15, 30 are sampled, and correlation with metrics is considered using Pearson, Spearman, and Kendall coefficients.}
    \label{fig:synth-eval}
\end{figure*}

\subsection{MOM on Degenerate cases}

For correlation with trajectory error, our metric requires to have at least three mutually orthogonal surfaces on the scene. Usually, this condition is satisfied in many environments: indoors (apartments, offices), outdoors (urban environments). In case when three mutually orthogonal surfaces are not available, statistics on the available amount of directions could be used -- it will still give a measure of map inconsistency that could be used to evaluate the algorithm, but without strong correlation with the trajectory error.
\section{Experiments}
The experiments section is divided into two parts: evaluating metrics quality in synthetic environments with ground truth poses available and evaluation algorithm for orthogonal subset extraction on real-life data (KITTI dataset).

\subsection{Synthetic environment}
Synthetic environments allow obtaining ground truth information about point cloud poses along the trajectory and hence estimate the quality of correlation with trajectory error. In this part of the work, we consider two types of synthetic environments: (i) environment with randomly generated configurations of synthetic planes, mainly because discussed metrics rely on this property of environment, (ii) outdoor simulator CARLA~\cite{Dosovitskiy17carla} with generated LiDAR point clouds for different city maps and trajectories.

To evaluate metric quality, the next pipeline is used:

\begin{enumerate}
    \item a sequence of point clouds and corresponding ground truth poses $T_{gt}$ are generated;
    \item ground truth trajectory is perturbed to obtain $T_{est}$: for plane environment perturbation of only translation part is used to follow lemma conditions, for CARLA scenes both rotation and translation parts are perturbed;
    \item the full-reference metric (RPE) is calculated for the pair of trajectories $(T_{gt}, T_{est})$;
    \item the no-reference metric is calculated for the map aggregated from point clouds with perturbed trajectory;
    \item after sampling 200 perturbed trajectories, a linear correlation and monotonic correlation between full-reference and no-reference metric are calculated using Pearson~\cite{pearson1895notes}, Spearman~\cite{spearman1961proof}, and Kendall~\cite{kendall1938new} statistics.
\end{enumerate}

The next metrics were considered for evaluation:

\begin{itemize}
    \item MME~--- original Mean Map Entropy~\cite{razlaw2015evaluation};
    \item MPV~--- original Mean Plane Variance~\cite{razlaw2015evaluation};
    \item OMME~--- Mean Map Entropy on subset of mutually orthogonal surfaces (ours), following discussion on it dependency from plane variance in Sec.~\ref{sec:back};
    \item MOM~--- Mean Plane Variance on subset of mutually orthogonal surfaces (ours).
\end{itemize}

\subsubsection{Synthetic planes}

Because the main assumption made in the metric proposals is that environments mainly contain planar surfaces, firstly, we consider the synthetic environment that contains only planes. The purpose of this is to evaluate the stability of considered metrics on different sets of planes, taking into account the problem of non-orthogonal cases and unbalanced number of points.

To generate different configurations of such an environment, the next process was proposed. First, we generate normals for planes: they contain three fixed orthogonal normals, count of other planes are sampled uniformly from $[0, 7]$, normals for those additional planes are sampled uniformly by the uniform sampling of $\theta$ from $[0, 2\pi)$ and $\phi$ from $[0, \pi)$ and obtaining normal from the formula:

\begin{equation}
    n = [ \sin \phi \cos \theta \ \ \sin \phi \sin \theta \ \ \cos \phi ].
\end{equation}

Secondly, displacement from the origin is generated for every plane in the range $[-10, 10]$ for every coordinate axis avoiding cases of plane intersections. Also, the density of every plane is sampled uniformly from the range $[30, 100]$ per \SI{1}{\metre\squared}.

This configuration provides orientation and translation of all planes from the origin as well as their density. For every pose, we uniformly sample points on those planes in radius \SI{1}{\metre\squared}. 


For experiments, 50 environments were generated. For every environment, trajectories of length $[5; 15; 30]$ were evaluated using the described pipeline.

Results of evaluation are presented on Fig.~\ref{fig:synth-eval}~(a). They demonstrate that usage of orthogonal subsets increases the quality of metric for every correlation coefficient, and orthogonal-based metrics achieve maximum value equal to 1 of the correlation coefficient. Also, high Pearson correlation demonstrates that MOM provides linear correlation with Relative Pose Error that gives more preference to its usage as no-reference map metric with strong dependence with trajectory error. This behavior could be explained by linear dependency between covariance along axis (which is actually a plane variance along plane normal) and RPE along this axis, demonstrated in Sec.~\ref{sec:lemma}. It could also be noticed, that all correlation coefficients grow with respect to number of poses in the trajectory. This behavior could be explained by the fact of obtaining a denser point cloud in the resulting map, that provides more information on point cloud consistency and errors in trajectory.

\subsubsection{CARLA simulator}

The CARLA simulator was used to generate outdoor LiDAR point clouds and corresponding sensor ground truth trajectories to evaluate metrics on environments with a richer structure. Compared to other simulators and datasets that support LiDAR generating (PreSIL~\cite{hurl2019precise},  Mai City Dataset), CARLA satisfies all the next requirements: 360 degrees of rotation, different outdoor environments, configurable types of LiDAR.

For evaluation, we consider 10 different scenes of 3 CARLA towns with 32-beam LiDAR and horizontal FoV in $[-10, 30]$ degrees, trajectory lengths are $[5; 15; 30]$.

Following main sampling pipeline we obtained the next results depicted in Fig.~\ref{fig:synth-eval}. For all trajectories the trend of orthogonal subset affection remains the same as in synthetic planes dataset~--- OMME and MOM outperform their ancestors, proving reliability of orthogonal-based approach in correlation with trajectory error. Among OMME and MOM, MOM provides better linear correlation, as demonstrated in Pearson statistics.

\subsection{Real environment}
To demonstrate that the algorithm for orthogonal subset extraction is suitable in real-life scenarios, especially outdoors, where ground truth poses are usually noisy for small-scale mapping, we evaluate it on urban KITTI maps (00, 05, 06, 07)~\cite{geiger2012we}. Correct orthogonal subset extraction was visually assessed on every 100 point cloud in those sequences. Also, statistics on the amount of extracted points in orthogonal subsets were calculated and is depicted in Fig.~\ref{fig:kitti-stat}. Statistics demonstrate that not more than 2k points are extracted from the original KITTI point cloud ($>$120k points), which means that MOM requires 60 times fewer points in comparison to its ancestors, speeding up metric calculations.

\begin{figure}[h]
    \centering
    \includegraphics[width=0.95\columnwidth]{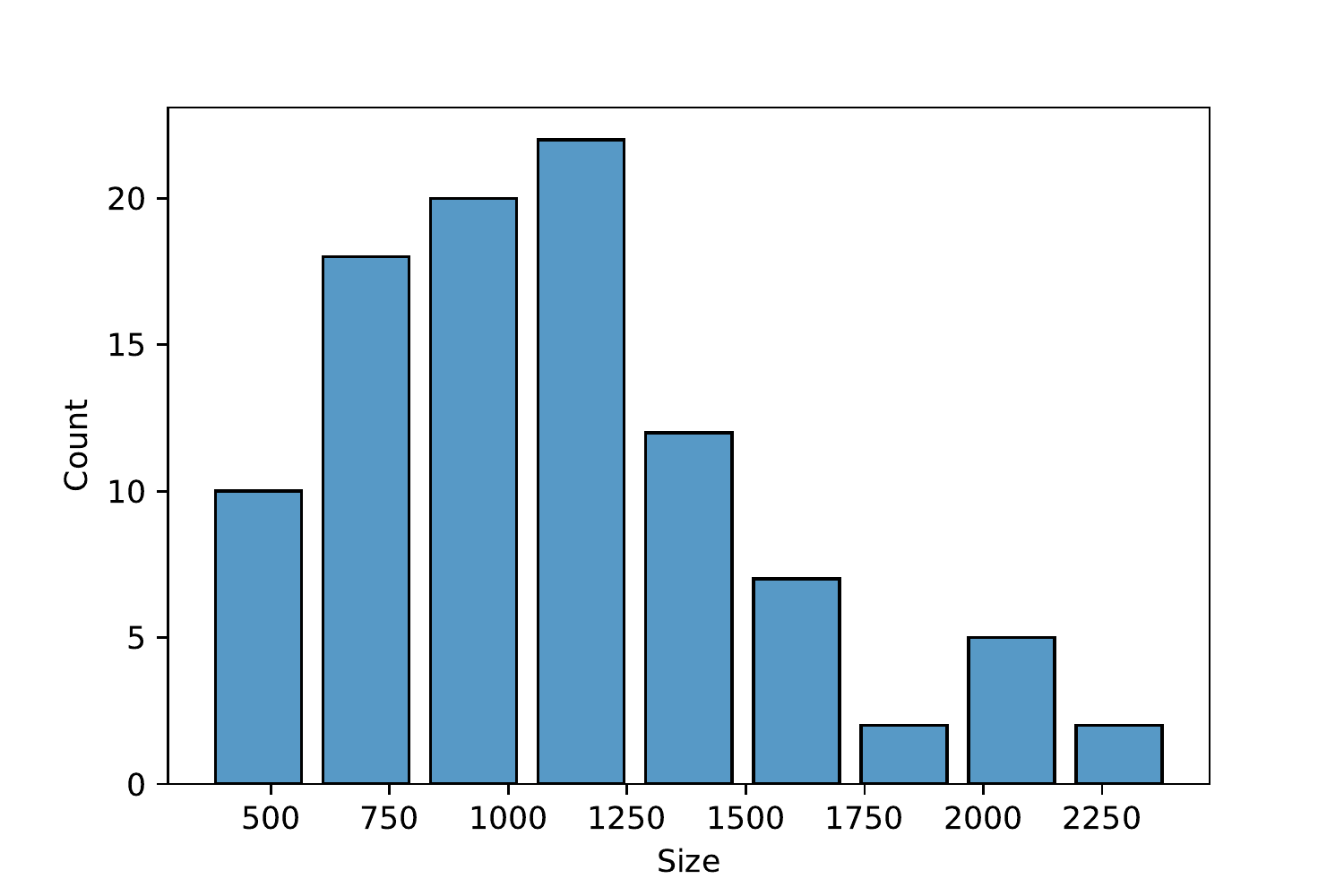}
    \caption{Statistics of extracted points from mutually orthogonal surfaces for every 100th point cloud in urban KITTI scenes.}
    \label{fig:kitti-stat}
\end{figure}

\subsection{Quality of KITTI odometry benchmark}

The proposed map metric could be used not only as an independent tool for trajectory and aggregated map quality assessment when GT is not available, but also as a tool for filtering out weak parts of benchmarks that have GT poses. Such an example is demonstrated in Fig.~\ref{fig:kitti-hotmap}, where MOM is calculated for every ten consecutive poses of the KITTI map providing information on how reliable are GT poses for different parts of the trajectory. Some parts of the map have low map consistency that means that GT poses are potentially not so accurate, that could lead to incorrect interpretation of odometry results on those parts of the trajectory.  

\begin{figure}[h]
    \centering
    \includegraphics[width=0.99\columnwidth]{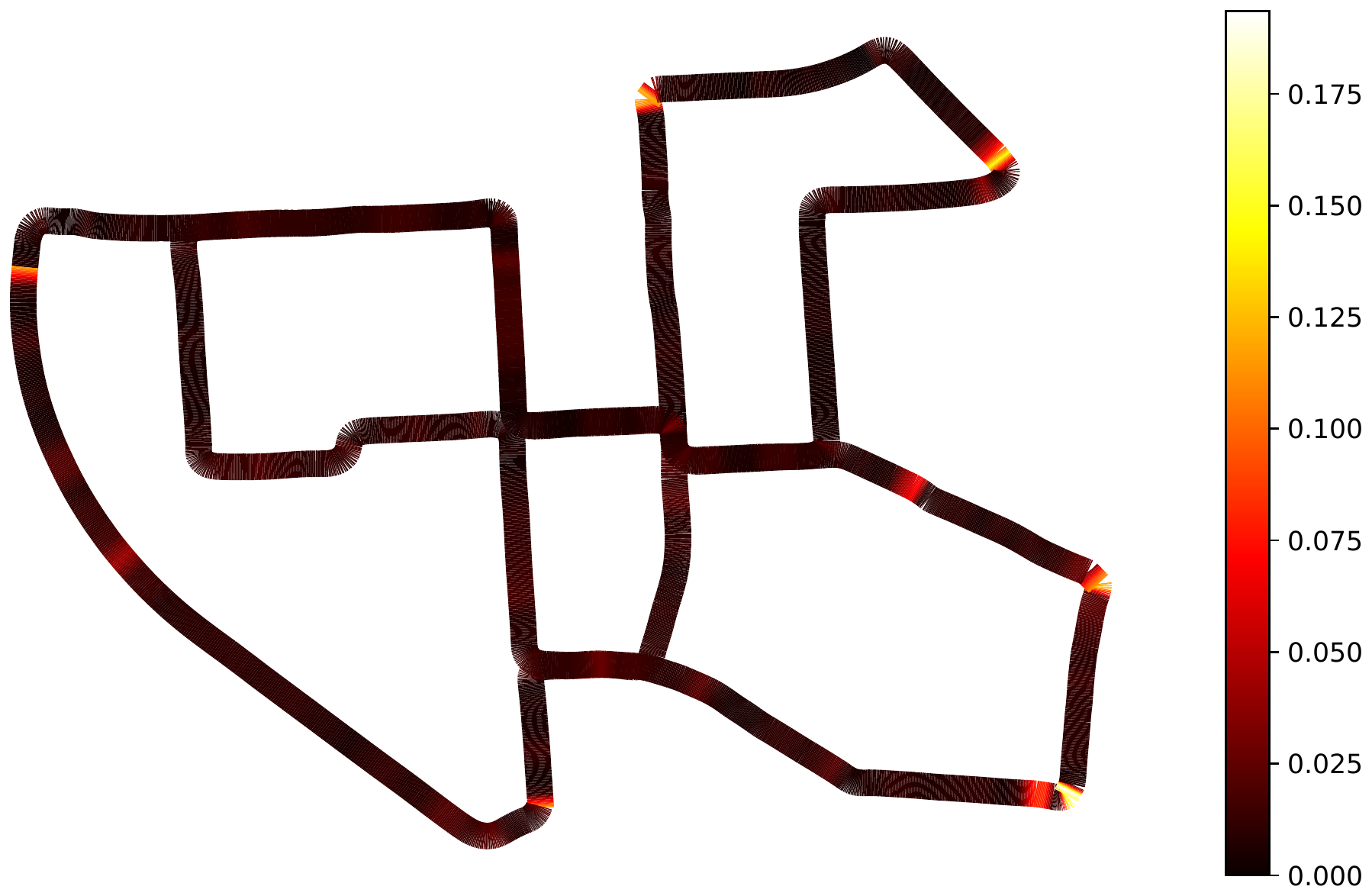}
    \caption{MOM map metric results for small-scale maps aggregated from every ten consecutive poses of KITTI odometry benchmark (map ``00''). Lower value (black) means more consistent map, higher value (yellow)~--- map with distortions caused by GNSS/INS imperfections.}
    \label{fig:kitti-hotmap}
\end{figure}
\section{Conclusion}

This paper has presented a no-reference metric MOM for evaluating trajectory quality by assessing the map's quality from registered point clouds via trajectory poses. Our metric employs map points from mutually orthogonal surfaces, thereby providing a strong correlation with full-reference trajectory error in comparison to previous works, and requires 40-60 times fewer points of the map for evaluation. In our work, we have proposed a mathematical interpretation of this correlation under some assumptions. Also, we have conducted statistical experiments in two synthetic environments, including the CARLA simulator that offers close to real-life outdoor scenes and LiDAR point clouds collection. Finally, since our metric requires points from mutually orthogonal surfaces, we developed an algorithm for their extraction from point clouds and evaluated its' performance on KITTI urban scenes.

\bibliographystyle{bib/IEEEtran}
\balance
\bibliography{bib/IEEEabrv,bib/main}

\begin{thebibliography}{10}
\providecommand{\url}[1]{#1}
\csname url@rmstyle\endcsname
\providecommand{\newblock}{\relax}
\providecommand{\bibinfo}[2]{#2}
\providecommand\BIBentrySTDinterwordspacing{\spaceskip=0pt\relax}
\providecommand\BIBentryALTinterwordstretchfactor{4}
\providecommand\BIBentryALTinterwordspacing{\spaceskip=\fontdimen2\font plus
\BIBentryALTinterwordstretchfactor\fontdimen3\font minus
  \fontdimen4\font\relax}
\providecommand\BIBforeignlanguage[2]{{%
\expandafter\ifx\csname l@#1\endcsname\relax
\typeout{** WARNING: IEEEtran.bst: No hyphenation pattern has been}%
\typeout{** loaded for the language `#1'. Using the pattern for}%
\typeout{** the default language instead.}%
\else
\language=\csname l@#1\endcsname
\fi
#2}}

\bibitem{geiger2012we}
A.~Geiger, P.~Lenz, and R.~Urtasun, ``Are we ready for autonomous driving? the
  kitti vision benchmark suite,'' in \emph{IEEE Conference on Computer Vision
  and Pattern Recognition}.\hskip 1em plus 0.5em minus 0.4em\relax IEEE, 2012,
  pp. 3354--3361.

\bibitem{schubert2018tum}
D.~Schubert, T.~Goll, N.~Demmel, V.~Usenko, J.~St{\"u}ckler, and D.~Cremers,
  ``{The TUM VI benchmark for evaluating visual-inertial odometry},'' in
  \emph{IEEE/RSJ International Conference on Intelligent Robots and Systems
  (IROS)}.\hskip 1em plus 0.5em minus 0.4em\relax IEEE, 2018, pp. 1680--1687.

\bibitem{sturm2012benchmark}
J.~Sturm, N.~Engelhard, F.~Endres, W.~Burgard, and D.~Cremers, ``A benchmark
  for the evaluation of {RGB-D SLAM} systems,'' in \emph{IEEE/RSJ International
  Conference on Intelligent Robots and Systems}, 2012, pp. 573--580.

\bibitem{ferrer2019}
G.~Ferrer, ``{Eigen-Factors: Plane Estimation for Multi-Frame and
  Time-Continuous Point Cloud Alignment},'' in \emph{IEEE/RSJ International
  Conference on Intelligent Robots and Systems}, 2019, pp. 1278--1284.

\bibitem{liu2021balm}
Z.~Liu and F.~Zhang, ``{BALM}: Bundle adjustment for lidar mapping,''
  \emph{IEEE Robotics and Automation Letters}, vol.~6, no.~2, pp. 3184--3191,
  2021.

\bibitem{li2019usip}
J.~Li and G.~H. Lee, ``{USIP}: Unsupervised stable interest point detection
  from 3d point clouds,'' in \emph{Proceedings of the IEEE/CVF International
  Conference on Computer Vision (ICCV)}, October 2019.

\bibitem{yew2018featnet}
Z.~J. Yew and G.~H. Lee, ``{3DFeat-Net}: Weakly supervised local 3d features
  for point cloud registration,'' \emph{Lecture Notes in Computer Science}, p.
  630–646, 2018.

\bibitem{behley2019semantickitti}
J.~Behley, M.~Garbade, A.~Milioto, J.~Quenzel, S.~Behnke, C.~Stachniss, and
  J.~Gall, ``Semantickitti: A dataset for semantic scene understanding of lidar
  sequences,'' in \emph{Proceedings of the IEEE/CVF International Conference on
  Computer Vision}, 2019, pp. 9297--9307.

\bibitem{knapitsch2017tanks}
A.~Knapitsch, J.~Park, Q.-Y. Zhou, and V.~Koltun, ``Tanks and temples:
  Benchmarking large-scale scene reconstruction,'' \emph{ACM Transactions on
  Graphics (ToG)}, vol.~36, no.~4, pp. 1--13, 2017.

\bibitem{droeschel2014local}
D.~Droeschel, J.~St{\"u}ckler, and S.~Behnke, ``Local multi-resolution
  representation for 6d motion estimation and mapping with a continuously
  rotating 3d laser scanner,'' in \emph{IEEE International Conference on
  Robotics and Automation (ICRA)}, 2014, pp. 5221--5226.

\bibitem{razlaw2015evaluation}
J.~Razlaw, D.~Droeschel, D.~Holz, and S.~Behnke, ``Evaluation of registration
  methods for sparse 3d laser scans,'' in \emph{2015 European Conference on
  Mobile Robots (ECMR)}.\hskip 1em plus 0.5em minus 0.4em\relax IEEE, 2015, pp.
  1--7.

\bibitem{zhang2021metric}
G.~Zhang and Y.~Chen, ``A metric for evaluating 3d reconstruction and mapping
  performance with no ground truthing,'' \emph{arXiv preprint
  arXiv:2101.10402}, 2021.

\bibitem{kummerle2009measuring}
R.~K{\"u}mmerle, B.~Steder, C.~Dornhege, M.~Ruhnke, G.~Grisetti, C.~Stachniss,
  and A.~Kleiner, ``On measuring the accuracy of slam algorithms,''
  \emph{Autonomous Robots}, vol.~27, no.~4, pp. 387--407, 2009.

\bibitem{Dosovitskiy17carla}
A.~Dosovitskiy, G.~Ros, F.~Codevilla, A.~Lopez, and V.~Koltun, ``{CARLA}: {An}
  open urban driving simulator,'' in \emph{Proceedings of the 1st Annual
  Conference on Robot Learning}, 2017, pp. 1--16.

\bibitem{lu1997globally}
F.~Lu and E.~Milios, ``Globally consistent range scan alignment for environment
  mapping,'' \emph{Autonomous robots}, vol.~4, no.~4, pp. 333--349, 1997.

\bibitem{bar2004estimation}
Y.~Bar-Shalom, X.~R. Li, and T.~Kirubarajan, \emph{Estimation with applications
  to tracking and navigation: theory algorithms and software}.\hskip 1em plus
  0.5em minus 0.4em\relax John Wiley \& Sons, 2004.

\bibitem{burgard2009comparison}
W.~Burgard, C.~Stachniss, G.~Grisetti, B.~Steder, R.~K{\"u}mmerle, C.~Dornhege,
  M.~Ruhnke, A.~Kleiner, and J.~D. Tard{\'o}s, ``A comparison of {SLAM}
  algorithms based on a graph of relations,'' in \emph{IEEE/RSJ International
  Conference on Intelligent Robots and Systems}, 2009, pp. 2089--2095.

\bibitem{burri2016euroc}
M.~Burri, J.~Nikolic, P.~Gohl, T.~Schneider, J.~Rehder, S.~Omari, M.~W.
  Achtelik, and R.~Siegwart, ``The {EuRoC} micro aerial vehicle datasets,''
  \emph{The International Journal of Robotics Research}, vol.~35, no.~10, pp.
  1157--1163, 2016.

\bibitem{alexiou2018benchmarking}
E.~Alexiou and T.~Ebrahimi, ``Benchmarking of objective quality metrics for
  colorless point clouds,'' in \emph{2018 Picture Coding Symposium
  (PCS)}.\hskip 1em plus 0.5em minus 0.4em\relax IEEE, 2018, pp. 51--55.

\bibitem{pearson1895notes}
K.~Pearson, ``Notes on regression and inheritance in the case of two parents
  proceedings of the royal society of london, 58, 240-242,'' 1895.

\bibitem{tian2017geometric}
D.~Tian, H.~Ochimizu, C.~Feng, R.~Cohen, and A.~Vetro, ``Geometric distortion
  metrics for point cloud compression,'' in \emph{IEEE International Conference
  on Image Processing (ICIP)}, 2017, pp. 3460--3464.

\bibitem{alexiou2018point}
E.~Alexiou and T.~Ebrahimi, ``Point cloud quality assessment metric based on
  angular similarity,'' in \emph{IEEE International Conference on Multimedia
  and Expo (ICME)}, 2018, pp. 1--6.

\bibitem{torlig2018novel}
E.~M. Torlig, E.~Alexiou, T.~A. Fonseca, R.~L. de~Queiroz, and T.~Ebrahimi, ``A
  novel methodology for quality assessment of voxelized point clouds,'' in
  \emph{Applications of Digital Image Processing XLI}, vol. 10752.\hskip 1em
  plus 0.5em minus 0.4em\relax International Society for Optics and Photonics,
  2018, p. 107520I.

\bibitem{alexiou2018point2d}
E.~Alexiou, T.~Ebrahimi, M.~V. Bernardo, M.~Pereira, A.~Pinheiro, L.~A. D.~S.
  Cruz, C.~Duarte, L.~G. Dmitrovic, E.~Dumic, D.~Matkovics, \emph{et~al.},
  ``Point cloud subjective evaluation methodology based on 2d rendering,'' in
  \emph{2018 Tenth International Conference on Quality of Multimedia Experience
  (QoMEX)}.\hskip 1em plus 0.5em minus 0.4em\relax IEEE, 2018, pp. 1--6.

\bibitem{meynet2019pc}
G.~Meynet, J.~Digne, and G.~Lavou{\'e}, ``{PC-MSDM}: A quality metric for 3d
  point clouds,'' in \emph{Eleventh International Conference on Quality of
  Multimedia Experience (QoMEX)}.\hskip 1em plus 0.5em minus 0.4em\relax IEEE,
  2019, pp. 1--3.

\bibitem{alexiou2020towards}
E.~Alexiou and T.~Ebrahimi, ``Towards a point cloud structural similarity
  metric,'' in \emph{2020 IEEE International Conference on Multimedia \& Expo
  Workshops (ICMEW)}.\hskip 1em plus 0.5em minus 0.4em\relax IEEE, 2020, pp.
  1--6.

\bibitem{spearman1961proof}
C.~Spearman, ``The proof and measurement of association between two things.''
  1961.

\bibitem{kendall1938new}
M.~G. Kendall, ``A new measure of rank correlation,'' \emph{Biometrika},
  vol.~30, no. 1/2, pp. 81--93, 1938.

\bibitem{hurl2019precise}
B.~Hurl, K.~Czarnecki, and S.~Waslander, ``Precise synthetic image and lidar
  (presil) dataset for autonomous vehicle perception,'' in \emph{2019 IEEE
  Intelligent Vehicles Symposium (IV)}.\hskip 1em plus 0.5em minus 0.4em\relax
  IEEE, 2019, pp. 2522--2529.

\end{thebibliography}

\end{document}